\def\BibTeX{{\rm B\kern-.05em{\sc i\kern-.025em b}\kern-.08em
    T\kern-.1667em\lower.7ex\hbox{E}\kern-.125emX}}
\newtheorem{mylemma}{Lemma}
\newtheorem{myProp}{Proposition}
\begin{document}

\title{QuantumFed: A Federated Learning Framework for Collaborative Quantum Training
}


\author{\IEEEauthorblockN{Qi Xia}
\IEEEauthorblockA{\textit{Department of Computer Science} \\
\textit{College of William and Mary}\\
Williamsburg, VA 23185, USA \\
qxia@cs.wm.edu}
\and
\IEEEauthorblockN{Qun Li}
\IEEEauthorblockA{\textit{Department of Computer Science} \\
\textit{College of William and Mary}\\
Williamsburg, VA 23185, USA \\
liqun@cs.wm.edu}
}

\maketitle

\begin{abstract}
    With the fast development of quantum computing and deep learning, quantum neural networks have attracted great attention recently. By leveraging the power of quantum computing,  deep neural networks can potentially overcome computational power limitations in classic machine learning. However, 
    when multiple quantum machines wish to train a global model using the local data on each machine, it may be very difficult to copy the data into one machine and train the model. Therefore, a collaborative quantum neural network framework is necessary. In this article, we borrow the core idea of federated learning to propose QuantumFed, a quantum federated learning framework to have multiple quantum nodes with local quantum data train a mode together. Our experiments show the feasibility and robustness of our framework.
\end{abstract}

\begin{IEEEkeywords}
quantum neural networks, federated learning
\end{IEEEkeywords}

\section{Introduction}

Quantum computing has been greatly developed in recent years. The idea of building a quantum computing based Turing machine was first proposed in early 1980s by Paul Benioff~\cite{Benioff1980}. Many explorations on this area were conducted by Richard Feynman~\cite{Feynman1982}, David Deutsch~\cite{1985RSPSA.400...97D}, etc. and people started to believe that quantum computing has the capability to beat classic computer in some tasks. In $1994$, Shor's algorithm was proposed to factor an integer using a quantum computer in polynomial time, which is exponentially faster than the fastest classic algorithms~\cite{doi:10.1137/S0036144598347011}. Recently, Google AI~\cite{Arute2019} and USTC~\cite{Zhong1460} claimed quantum supremacy for tasks that are infeasible on any classic computer.
In the meantime, deep neural network~\cite{LeCun2015} has been found efficient in many practical tasks such as computer vision~\cite{726791,Voulodimos2018,He2015,7298965}, natural language processing~\cite{10.1145/1390156.1390177,chen-etal-2017-enhanced,NIPS2017_3f5ee243,Devlin2018}, etc. It uses a hierarchical neural architecture to understand the world and achieves a great success in both industry implementation and academic research.

In recent years, there is a trend of combining deep learning and quantum computing together to reduce the huge computational cost for larger and deeper classic neural networks. For this purpose, the quantum neural network was naturally proposed~\cite{KAK1995143}. A quantum neural network uses the idea of a classical neural network in a quantum way to learn from the training data. By utilizing the main property of qubit superposition and entanglement in quantum mechanics, it tries to improve the computational efficiency and reduce the long training time and heavy computational resources in deep learning~\cite{GUPTA2001355,NIPS2003_50525975,Schuld2014,Wan2017,Beer2020}.

In deep neural network  model training, sometimes it is necessary to train a model through multiple machines in a distributed manner, e.g., federated learning~\cite{FEDERATED,FEDERATED2}. Federated learning is a collaborative way to train a global model where each node has their private local data in classic machine learning. For quantum machine learning, it is natural to use this collaborative approach for collaborative training. In this paper, we learn from this idea and propose a QuantumFed framework. Our contributions are summarized below:


\begin{itemize}
    \item We propose QuantumFed, a quantum federated learning framework to collaborate multiple quantum nodes with local quantum data together to train a global quantum neural network model.
    \item We conduct several simulation experiments to show that our QuantumFed framework is capable of collaborating multiple nodes and is robust for noisy data.
\end{itemize}

\section{Preliminaries}

\subsection{Quantum Computing Basis}

In quantum computing, the qubit is the basic unit to represent the information. A qubit has two basis states $|0\rangle$ and $|1\rangle$ like the classic bit in traditional computer, but it can also be in a superposition, which is a combination of the two basis states: $|\psi\rangle=\alpha|0\rangle+\beta|1\rangle$ where $\alpha^2+\beta^2=1$. Therefore, a qubit is capable of expressing more information than a classic bit. When observing the qubit, it will collapse to one of the basis states with corresponding probability, and thus we can get a statistically accurate estimation after sufficient times of observations. In addition, the entanglement of qubits allows more qubits to have correlations with each other and $n$ qubits, in this scenario, will have $2^n$ basis states and can be in a superposition among them, which carries an exponentially increasing amount of information.

In order to perform computations on the qubits, there are several common quantum logic gates: Pauli, Hadamard, Controlled Not. Unlike the AND and OR gate from classic computers, quantum operators are always reversible and will compute an output with the same dimension, and thus can be represented as a unitary. If we represent the input qubits state as a column vector, for example, $|\psi\rangle=\frac{1}{\sqrt{6}}|00\rangle+\frac{1}{\sqrt{6}}|01\rangle+\frac{1}{\sqrt{3}}|10\rangle+\frac{1}{\sqrt{3}}|11\rangle\rightarrow [\frac{1}{\sqrt{6}},\frac{1}{\sqrt{6}},\frac{1}{\sqrt{3}},\frac{1}{\sqrt{3}}]^{T}$, the output of operators are corresponding unitary left multiplied states.

\subsection{Quantum Neural Network}

There are lots of explorations of implementing deep neural networks in a quantum way. In this article, we adopt a widely used quantum deep neural network architecture as Figure~\ref{qnn}.
\begin{figure}[htb]
    \centering
    \includegraphics[width=0.71\columnwidth]{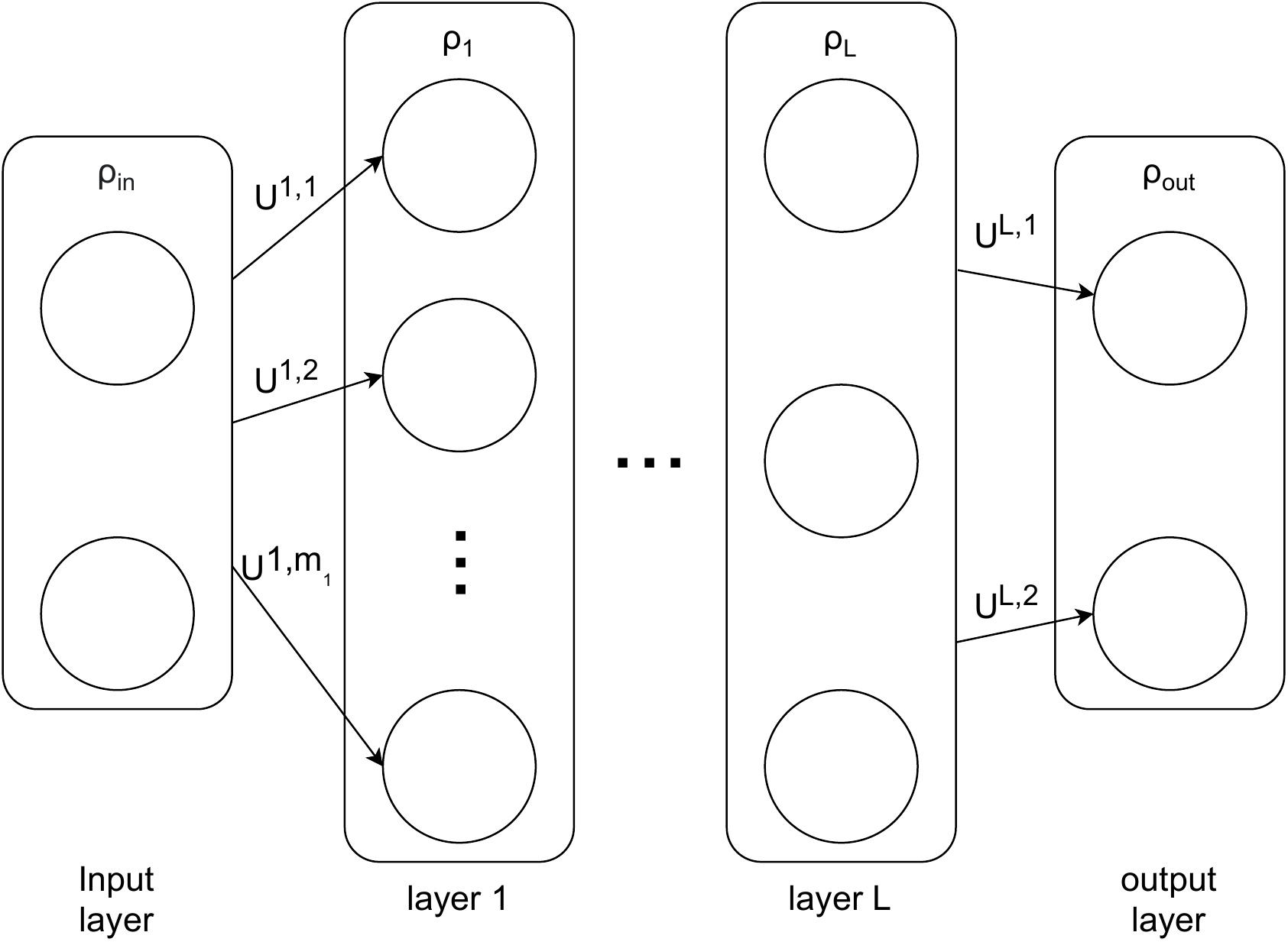}
    \caption{An architecture example of the quantum neural network.}
    \label{qnn}
\end{figure}
Assume in layer $l$, the input is a state $\rho^{l-1}$ of $m_{l-1}$ qubits and this layer will give an output of $m_l$ qubits, then the $l$-the layer transition map $\mathcal{E}^l$ is given by:
\begin{equation}
    \mathcal{E}^l(\rho^{l-1})=\text{tr}_{l-1}(U^l(\rho^{l-1}\otimes|0\cdots0\rangle_l\langle0\cdots0|){U^l}^{\dagger})
\end{equation}
$U^{l}$ is the $2^{m_{l-1}+m_l}\times2^{m_{l-1}+m_l}$ dimensional perceptron unitary of layer $l$. A partial trace operation is performed to get the output state of layer $l$. For simplicity, we apply $U^l$ by sequentially applying $m_l$ independent perceptron unitaries $U^{l,j}$ that act on $m_{l-1}$ input qubits and $j$-th qubit in layer $l$, that is, $U^l=\prod_{j=m_l}^1U^{l,j}$. Note that $U^{l,j}$ here are acting on the current layer, which means $U^{l,j}$ is actually $U^{l,j}\otimes\mathbb{I}^{l}_{1,\cdots j-1,j+1,\cdots m_l}$. In this way, we can feedforward the input state layer by layer to get an output state:
\begin{equation}
    \rho^{out}=\mathcal{E}^{out}(\mathcal{E}^L(\cdots\mathcal{E}^2(\mathcal{E}^1(\rho^{in}))\cdots))
\end{equation}
The hyper parameters of the quantum deep neural network are the unitaries, so as long as we have the network structure and unitaries, we can describe a model.

In order to represent the input and output data in a quantum way, for data that is stored by classic bits, we need to first transform the data to qubit representation. One way to do this is that we can use a $d$-qubits state $|\psi\rangle_d$ to represent a superposition of $2^d$ basis states in Hilbert space $\mathcal{H}^{2^d}$, that is, $|\psi\rangle_d=\sum_{i=1}^{2^d}\alpha_i|z_i\rangle$ where $\alpha_i$ is the complex amplitude and $z_i$ is a basis state in $\mathcal{H}^{2^d}$.

\subsection{Federated Learning}

Federated learning is a special kind of collaborative distributed learning in the machine learning area to train a global model across multiple computational devices or nodes who keep their private local training data.
In federated learning, the central server only keeps a global model and does not keep data. Each node queries a global model at some times, performs local updates on their local data, and uploads the update information to the central server. During one iteration, the central server receives all the computation results and updates the global model by aggregating local updates. In a classic neural network environment, the global model parameter at time $t$ is $w_t$. Assume there are $m$ participating nodes at time $t$ and node $i$ performs local update to get updated model $w_t^i=\text{LocalUpdate}(w_t)$. The LocalUpdate function is the classic gradient descent for one or more steps on local data. Then after the central server receives all the local updated models, it aggregates those models and updates the global model by $w_{t+1}=\text{GlobalUpdate}(w_t^1,w_t^2,\cdots,w_t^m)$.  In practice, we usually simply use a weighted average function as GlobalUpdate to aggregate the uploaded models, where the weight is usually the data volume of each node.

\section{QuantumFed Framework}

In this section, we will detail our QuantumFed framework.

\subsection{Cost Function}

The cost functions in classic neural networks are usually mean squared error loss or cross-entropy loss. Although these cost functions can still be applied in quantum neural networks, they are not easy to compute with quantum operations. Here like \cite{Beer2020}, we use fidelity as our cost function to measure the difference between label data states and output states. Fidelity represents the probability that one state will be identified as the other state in one measurement. Let $(\phi_x^{in},\phi_x^{out}),x=1,2,\cdots,N$ be the training data and $\rho_x^{out}$ be the output states that are derived by the current quantum neural network with the input data $\phi_x^{in}$, the cost function $\mathcal{C}$ is:
\begin{equation}
    \mathcal{C}=\frac{1}{N}\sum_{x=1}^N\langle\phi_x^{out}|\rho_x^{out}|\phi_x^{out}\rangle
    \label{fidelity}
\end{equation}
(\ref{fidelity}) measures the closeness between two states. When the output states are not pure, we can use a generalized fidelity function: $\mathcal{C}=\frac{1}{N}\sum_{x=1}^N(tr\sqrt{{\phi_x^{out}}^{1/2}\rho_x^{out}{\phi_x^{out}}^{1/2}})^2$. Note that the value of the fidelity cost function is between $0$ and $1$, while $1$ expresses the best performance.

\subsection{Local Update}

The local update is performed in each node in the quantum federated learning system. The goal of the local update is to maximize the cost function to $1$ based on the local dataset in the given steps (interval length). In quantum neural networks, the analogue of the model weight in classic neural networks is the model perceptron unitary $U$, and the model update is defined as $U\rightarrow e^{i\epsilon K}U$. Here $\epsilon$ is the update step size and $K$ is the update matrix. Therefore, each local step is to maximize the cost function by choosing an appropriate update matrix:
\begin{equation}
    K=\arg_{K}\max (\mathcal{C}(e^{i\epsilon K}U, (\phi_x^{in},\phi_x^{out}))-\lambda\|K\|_2^2)
    \label{defineK}
\end{equation}
$\mathcal{C}(U,(\phi_x^{in},\phi_x^{out}))$ is the fidelity cost of a model with perceptron unitary $U$ and local dataset $(\phi_x^{in},\phi_x^{out})$. The $-\lambda\|K\|_2$ is introduced by a Lagrange multiplier $\lambda$ to bound the norm of update matrix $K$ and $\|\cdot\|_2$ is the matrix $L_2$-norm.

Specifically, on the node $n$ side, let $(\phi_{n,x}^{in},\phi_{n,x}^{out}),x=1,2,\cdots,N_n$ be the local training data in node $n$, $U_n^{l,j}$ be the perceptron unitary of layer $l$, perceptron $j$ and $U_n^l=\prod_{j=m_l}^{1}U_{n}^{l,j}$, the output states of data $x$ at layer $l$ $\rho_{x}^l$ equal to $tr_{l-1}(U_n^l(\rho_{x}^{l-1}\otimes|0\cdots0\rangle_l\langle 0\cdots0|){U_n^l}^{\dagger})$ similar to \cite{Beer2020}, we can derive the update matrix $K_j^l$ by Proposition~\ref{computeK}.

\begin{myProp}
\label{computeK}
Let the cost function be the fidelity defined in (\ref{fidelity}), we can solve (\ref{defineK}) using gradient ascent by the following:
\begin{equation}
    K_j^l=\eta\frac{2^{m_{l-1}}i}{N_n}\sum_{x=1}^{N_n}tr_{rest}M_x^{l,j}
    \label{propK}
\end{equation}
$tr_{rest}$ is over all qubits that are not affected by $U_{n}^{l,j}$ and $M_x^{l,j}$ is computed by $M_x^{l,j}=[\prod_{\alpha=j}^1U_{n}^{l,\alpha}(\rho_{x}^{l-1}\otimes|0\cdots0\rangle_l\langle 0\cdots0|)\prod_{\alpha=1}^j{U_{n}^{l,\alpha}}^{\dagger},\prod_{\alpha=j+1}^{m_l}{U_{n}^{l,\alpha}}^{\dagger}(\mathbb{I}_{l-1}\otimes\sigma_{x}^l)\prod_{\alpha=m_l}^{j+1}U_{n}^{l,\alpha}]$. Here $\mathcal{F}^l$ is the adjoint channel to $\mathcal{E}^l$ and $\sigma_x^l=\mathcal{F}^{l+1}(\cdots\mathcal{F}^{out}(|\phi_{n,x}^{out}\rangle\langle\phi_{n,x}^{out}|))$.
\end{myProp}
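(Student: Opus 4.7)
The plan is to linearize the per-sample fidelity in $\epsilon$, show that the coefficient of $K_j^l$ organizes itself into the commutator form $M_x^{l,j}$, and then solve the regularized maximization (\ref{defineK}) in closed form.

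First I would rewrite each per-sample cost using the adjoint-channel identity $\text{tr}(A\mathcal{E}(B))=\text{tr}(\mathcal{F}(A)B)$ iteratively, pushing the target projector $|\phi_{n,x}^{out}\rangle\langle\phi_{n,x}^{out}|$ backward through layers $\text{out},L,\dots,l+1$. This produces the backpropagated operator $\sigma_x^l$ defined in the statement and reduces the per-sample fidelity to $\text{tr}(\sigma_x^l\rho_x^l)$. Substituting the definition of $\rho_x^l$ and absorbing the partial trace via $\text{tr}(A\,\text{tr}_{l-1}X)=\text{tr}((\mathbb{I}_{l-1}\otimes A)X)$ puts the cost into the manifestly $U_n^l$-dependent form $\text{tr}\bigl((\mathbb{I}_{l-1}\otimes\sigma_x^l)\,U_n^l(\rho_x^{l-1}\otimes|0\cdots0\rangle\langle0\cdots0|)\,U_n^{l\dagger}\bigr)$ on the combined layer-$(l-1,l)$ space.

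Next I would isolate $U_n^{l,j}$ by writing $U_n^l=V_{>j}\,U_n^{l,j}\,V_{<j}$ with $V_{>j}=\prod_{\alpha=m_l}^{j+1}U_n^{l,\alpha}$ and $V_{<j}=\prod_{\alpha=j-1}^1 U_n^{l,\alpha}$, and substitute $U_n^{l,j}\to e^{i\epsilon K_j^l}U_n^{l,j}$. Expanding $e^{i\epsilon K_j^l}=\mathbb{I}+i\epsilon K_j^l+O(\epsilon^2)$ produces two first-order contributions---one from the perturbation on the left of $\rho_x^{l-1}\otimes|0\cdots0\rangle\langle0\cdots0|$, one from its adjoint on the right---and cyclicity of the trace reorganizes their difference into $\text{tr}\bigl((K_j^l\otimes\mathbb{I}_{\text{rest}})[A_x,B_x]\bigr)$, with $A_x=V_{\le j}(\rho_x^{l-1}\otimes|0\cdots0\rangle\langle0\cdots0|)V_{\le j}^\dagger$ and $B_x=V_{>j}^\dagger(\mathbb{I}_{l-1}\otimes\sigma_x^l)V_{>j}$, which is exactly the commutator $M_x^{l,j}$. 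Reducing to the $m_{l-1}+1$ qubits that $K_j^l$ actually touches via $\text{tr}((K\otimes\mathbb{I})X)=\text{tr}_{\text{acted}}(K\,\text{tr}_{\text{rest}}X)$ yields the gradient direction.

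The last step is to solve $\max_{K_j^l}\,[\,\delta\mathcal{C}-\lambda\|K_j^l\|_2^2\,]$ over Hermitian $K_j^l$. Since the objective is quadratic and strictly concave, setting its matrix derivative to zero produces $K_j^l$ proportional to $\frac{i}{N_n}\sum_x\text{tr}_{\text{rest}}M_x^{l,j}$; bundling $\epsilon$, $1/(2\lambda)$, and the $2^{m_{l-1}}$ factor coming from the identity-on-$m_{l-1}$-qubits partial-trace normalization into a single learning rate $\eta$ then reproduces (\ref{propK}). The main obstacle is the second step: the two first-order terms must be cycled carefully to assemble the exact commutator $[A_x,B_x]$ stated in the proposition---cyclicity holds inside the full trace but fails once $K_j^l$ is singled out, so the order in which the $V_{>j}$ and $V_{<j}$ factors are moved matters, and one must verify that the residual ``rest'' qubits of layer $l$ cleanly factor out before the partial trace is applied.
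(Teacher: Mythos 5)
Your derivation is correct and follows exactly the route the paper relies on: the paper states Proposition~1 without proof, deferring to the derivation in \cite{Beer2020}, and your argument (backpropagating the target projector via the adjoint channels to get $\sigma_x^l$, lifting the partial trace, isolating $U_n^{l,j}$, expanding $e^{i\epsilon K_j^l}$ to first order so cyclicity assembles the commutator $M_x^{l,j}$, reducing via $\mathrm{tr}((K\otimes\mathbb{I})X)=\mathrm{tr}(K\,\mathrm{tr}_{\mathrm{rest}}X)$, and solving the concave quadratic in $K_j^l$) reproduces that argument faithfully. The only loose end, the provenance of the $2^{m_{l-1}}$ prefactor, is harmless here since it is absorbed into the free learning rate $\eta$, exactly as you note.
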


From Proposition~\ref{computeK}, we can derive a closed-form update matrix for each perceptron unitary. This update matrix is an analogue of the gradient in the classic neural network, and the way to derive it is like the back-propagation process. Therefore, we can update the perceptron unitary like gradient descent based on local data and Proposition~\ref{computeK} for each step to maximize the cost function.

In classic federated learning, participating nodes are not required to do only one step gradient descent in each iteration. Therefore, we also assume that the local perceptron unitary can update for several steps. Here we define the number of steps as the interval length $I_l$. Then the local update algorithm QuanFedNode is described in Algorithm~\ref{alg1}.

\addtolength{\topmargin}{0.1in}

\begin{algorithm}
    \caption{QuanFedNode (Node $n$ Side)}
    \label{alg1}
    \begin{algorithmic}[1]
    \REQUIRE ~~\\
    Network architecture: there are $L$ layers in the quantum neural network and layer $l$ has $m_l$ quantum perceptrons;
    a copy of network perceptron unitaries from the global model $U_{n}^{l,j}=U_t^{l,j}$;
    training data: $(\phi_{n,x}^{in}, \phi_{n,x}^{out}),x=1,2,\cdots,N_n$;
    interval length $I_l$;
    total number of data among all participating nodes $N_t$;
    learning rate $\eta$ and update step size $\epsilon$;
    \ENSURE ~~\\
    Send update unitaries to the central server.
    \vspace{6pt}
    \STATE Set the interval index $k=1$;\\
    \STATE If $k<=I_l$, continue to the next step, otherwise go to step~$6$;\\
    \STATE Feedforward the training data at each layer:
    \begin{itemize}
        \item For every layer $l$, apply the current channel $\mathcal{E}^l$ to layer $l-1$: let $U_n^l=\prod_{j=m_l}^{1}U_{n}^{l,j}$;
        \item Let $\rho_{x}^l=tr_{l-1}(U_n^l(\rho_{x}^{l-1}\otimes|0\cdots0\rangle_l\langle 0\cdots0|){U_n^l}^{\dagger})$ and store $\rho_{x}^l$ for every layer;
    \end{itemize}
    \STATE Temporarily update the network:
    \begin{itemize}
        \item Compute the unitary update parameter at layer $l$, perceptron $j$, $K_j^l$ by (\ref{propK});
        \item Store update unitary at interval $k$, $U_{n,k}^{l,j}=e^{i\epsilon\frac{N_n}{N_t}K_j^l}$, and temporarily update the network by $U_{n}^{l,j}=e^{i\epsilon K_j^l}U_{n}^{l,j}$;
    \end{itemize}
    \STATE Let $k=k+1$ and go to step~$2$;\\
    \STATE Send all stored update unitaries $U_{n,k}^{l,j}$ to the central server.
    \end{algorithmic}
\end{algorithm}

In QuanFedNode algorithm, there are basically two steps:
\begin{itemize}
    \item Feedforward step. We apply the input state of the training data to the quantum neural network and feedforward it to every qubit by using the perceptron unitaries.
    \item Temporary update step. We first compute the unitary update matrix $K_j^l$ for layer $l$, perceptron $j$ by Proposition~\ref{computeK}. Then the local temporary update is derived by $U_{n}^{l,j}\leftarrow e^{i\epsilon K_j^l}U_n^{l,j}$. Meanwhile, we also compute another update unitary at interval $k$, $U_{n,k}^{l,j}=e^{i\epsilon\frac{N_n}{N_t}K_j^l}$. $N_t$ is the number of data on all participating nodes in this iteration. $U_{n,k}^{l,j}$ is computed for global update and will be sent to the central server later.
\end{itemize}
The temporary update will be processed in each participating node with their local training data and repeat for $I_l$ times. Then we can simply send the update unitaries $U_{n,k}^{l,j}$ to the central server.

\subsection{Global Update}
\label{sec-global-update}

Global update is performed on the central server side. It maintains a global model that is updated by each node's local data and update unitaries. The goal of the global update is to maximize the cost function based on the global dataset among all quantum nodes. Because the data is stored in each node and the central server is not able to access the private local data, the global update can only be computed based on the update unitaries that are uploaded by each node. We describe the QuanFedPS algorithm in Algorithm~\ref{alg2}.

\begin{algorithm}
    \caption{QuanFedPS (Central Server Side)}
    \label{alg2}
    \begin{algorithmic}[1]
    \REQUIRE ~~\\
    Network architecture: there are $L$ layers in the quantum neural network and layer $l$ has $m_l$ quantum perceptrons;
    total number of nodes $N$ and number of selected nodes in each iteration $N_p$;
    total synchronization iterations $N_s$;
    number of training data on node $n$ $N_n$;
    interval length $I_l$;
    \ENSURE ~~\\
    The trained quantum neural network.
    \vspace{6pt}
    \STATE Initialize the network by randomly choosing all the unitaries $U_t^{l,j}$, set the iteration index $t=1$;
    \STATE If $t<=N_s$, continue to the next step, otherwise go to step~$6$;
    \STATE Randomly select $N_p$ nodes from all nodes. Assume the set of selected node indexes is $S_n$, compute the total number of data among all participating nodes $N_t=\sum_{n\in S_n}N_n$. For each selected node, run QuanFedNode algorithm and get update unitaries $U_{n,k}^{l,j}$;
    \STATE Compute the global update unitaries by applying update unitaries from all selected nodes:
    \begin{equation}
        U^{l,j}=\prod_{k=I_l}^{1}\prod_{n\in S_n}U_{n,k}^{l,j}
        \label{global-update}
    \end{equation}
    \STATE Update the global model by $U_{t+1}^{l,j}=U^{l,j}U_t^{l,j}$;
    \STATE Let $t=t+1$ and go to step~$2$;
    \STATE Output the trained quantum neural network model.
    \end{algorithmic}
\end{algorithm}

Basically there are three major steps in QuanFedPS algorithm:
\begin{itemize}
    \item Initialization step. At the beginning of the quantum federated training process, the central server first initializes the model parameters (perceptron unitaries) by randomly assigning the value.
    \item Node selection step. Like the classic federated learning framework, we need to randomly select $N_p$ nodes out of all $N$ nodes who will participate in the current iteration. This can help improve the randomness of the data distribution and decrease the data heterogeneity. Besides, it can reduce the communication cost by selecting fewer nodes.
    \item Global update step. After participating nodes complete the local training and send update unitaries back to the central server, the central server updates the global model by applying those update unitaries and finishing the current iteration. We will take the global model update for $N_s$ iterations.
\end{itemize}

The design of global update is based on the observation that the order of applying update unitaries almost does not matter and the update unitaries almost surely have multiplicative identity property when $\epsilon\rightarrow 0$. Theoretically, we have the following lemma.

\begin{mylemma}
    Assume $U_1=e^{i\epsilon K_1}, U_2=e^{i\epsilon K_2}$ are two update unitaries and $K_1, K_2$ are bounded by the $L_2$-norm, we have $\lim_{\epsilon\rightarrow 0}U_1U_2=e^{i\epsilon(K_1+K_2)}$ at convergence speed $O(\epsilon^2)$.
    \label{multiplicative}
\end{mylemma}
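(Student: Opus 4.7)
The plan is to do a direct Taylor expansion of both sides around $\epsilon = 0$ and compare them term by term, using the $L_2$-norm bound on $K_1, K_2$ to control the remainders. This is essentially the first-order Baker--Campbell--Hausdorff identity, so no deep machinery is needed.

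First I would write out the matrix exponentials explicitly:
\begin{equation}
U_1 = I + i\epsilon K_1 - \tfrac{\epsilon^2}{2} K_1^2 + R_1(\epsilon), \qquad U_2 = I + i\epsilon K_2 - \tfrac{\epsilon^2}{2} K_2^2 + R_2(\epsilon),
\end{equation}
where the remainders satisfy $\|R_i(\epsilon)\|_2 \le C \epsilon^3 e^{\epsilon \|K_i\|_2}$ by the standard tail bound on the exponential series; the hypothesis that $\|K_i\|_2$ is bounded is what lets me fold this into a clean $O(\epsilon^3)$ estimate.

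Next I would multiply the two expansions and collect terms up to order $\epsilon^2$. The zeroth-order term is $I$, the first-order term is $i\epsilon(K_1 + K_2)$, and the second-order term is $-\tfrac{\epsilon^2}{2}(K_1^2 + 2 K_1 K_2 + K_2^2)$. The key algebraic rewrite is
\begin{equation}
K_1^2 + 2 K_1 K_2 + K_2^2 = (K_1 + K_2)^2 + [K_1, K_2],
\end{equation}
so that
\begin{equation}
U_1 U_2 = I + i\epsilon(K_1 + K_2) - \tfrac{\epsilon^2}{2}(K_1 + K_2)^2 - \tfrac{\epsilon^2}{2}[K_1, K_2] + O(\epsilon^3).
\end{equation}
Expanding the right-hand side $e^{i\epsilon(K_1 + K_2)}$ in the same way gives exactly the first three displayed terms (with no commutator), so subtracting yields $U_1 U_2 - e^{i\epsilon(K_1 + K_2)} = -\tfrac{\epsilon^2}{2}[K_1, K_2] + O(\epsilon^3)$. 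Taking $\|\cdot\|_2$ and using $\|[K_1, K_2]\|_2 \le 2\|K_1\|_2 \|K_2\|_2$ produces the $O(\epsilon^2)$ convergence rate claimed.

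I do not expect a genuine obstacle: the only subtlety is bookkeeping on the remainder, i.e.\ making sure the boundedness of $\|K_1\|_2, \|K_2\|_2$ really does absorb into a uniform $O(\epsilon^3)$ tail so that the difference is honestly $O(\epsilon^2)$ rather than merely $o(\epsilon)$. Once that is set up, the rest is the algebraic identity above. A small caveat worth flagging in the writeup is that the lemma as stated does not literally claim equality in the limit (the commutator term does not vanish unless $[K_1, K_2] = 0$); the correct reading is that $U_1 U_2$ agrees with $e^{i\epsilon(K_1 + K_2)}$ up to error $O(\epsilon^2)$ in the operator norm, which is exactly what the expansion above establishes.
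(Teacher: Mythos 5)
Your proposal is correct and follows essentially the same route as the paper's proof: Taylor-expand $U_1$, $U_2$, and $e^{i\epsilon(K_1+K_2)}$, multiply, and compare, with the norm bounds on $K_1,K_2$ absorbing the remainders. You go one order further than the paper and identify the leading error term as $-\tfrac{\epsilon^2}{2}[K_1,K_2]$, which is a useful sharpening (and your caveat that the limit statement is only meaningful as an $O(\epsilon^2)$ agreement, since both sides tend to $I$, is well taken), but the underlying argument is the same.
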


\begin{proof}
    By Taylor's expansion, we have:
    \begin{align*}
        U_1 &= I+i\epsilon K_1 + O(\epsilon^2)\\
        U_2 &= I+i\epsilon K_2 + O(\epsilon^2)\\
        e^{i\epsilon(K_1+K_2)} &= I+i\epsilon(K_1+K_2) + O(\epsilon^2)
    \end{align*}
    Then we have:
    \begin{equation}
        U_1U_2-e^{i\epsilon(K_1+K_2)} = O(\epsilon^2)
        \label{eqsmall}
    \end{equation}
    From (\ref{eqsmall}), we can derive $\lim_{\epsilon\rightarrow 0}U_1U_2=e^{i\epsilon(K_1+K_2)}$ at convergence speed $O(\epsilon^2)$.
\end{proof}

From Lemma~\ref{multiplicative}, when $\epsilon$ is small enough, we can rewrite the global update unitaries (\ref{global-update}) as:
\begin{equation}
    U^{l,j}=\prod_{k=I_l}^{1}e^{i\epsilon K_k^{l,j}}, K_k^{l,j}=\frac{\sum_{n\in S_n}N_nK_{n,k}^{l,j}}{\sum_{n\in S_n}N_n}
\end{equation}
Here we define $K_{n,k}^{l,j}$ as the update matrix for node $n$ at step $k$, layer $l$, and perceptron $j$. Note that the update matrix $K_{n,k}^{l,j}$ is derived from (\ref{propK}), and (\ref{propK}) is actually an average of the partial trace of $M_{x}^{l,j}$ for all the local data. Therefore, if we denote the dataset on node $n$ is $D_n$, and $D_p=\bigcup_{n\in S_n}D_n$, we have:
\begin{align*}
    K_k^{l,j} &= \frac{\sum_{n\in S_n}(N_n\eta\frac{2^{m_l-1}i}{N_n}\sum_{x\in D_n}tr_{rest}M_x^{l,j})}{\sum_{n\in S_n}N_n}\\
    &= \eta\frac{2^{m_l-1}i}{\sum_{n\in S_n}N_n}\sum_{x\in D_p}tr_{rest}M_x^{l,j}
\end{align*}
This is equivalent to compute for a local update on the union dataset of the data on all participating nodes in this iteration when $k=1$. So when the interval length is set to $1$, the QuantumFed framework is exactly the same as training on a single quantum machine with all data. However, when the interval length is greater than $1$, things become much more complicated. We will discuss this problem in the next subsection.

\subsection{Discussions}

\subsubsection{GD vs SGD}

The QuanFedNode algorithm that we described in Algorithm~\ref{alg1} uses all the training data in each update step. Therefore, the training process is more like gradient descent (GD) in classic deep learning.
An alternative training method is by randomly choosing a mini-batch of training data in each step, which is an analogue of mini-batch stochastic gradient descent (SGD) in classic deep learning. SGD can solve the biased data distribution problem by introducing randomness and reduce the computational cost. In the experiment part, we will compare GD-type and SGD-type quantum federated training.

\subsubsection{Interval Length $> 1$}

As we discussed before, when $I_l=1$, the QuantumFed framework is exactly the same as training on a single machine. When $I_l>1$, because of the local temporary update, the update unitaries that are computed after the first step are based on the temporarily updated model parameters, which are not the same as each other participating nodes in one iteration. One way to understand it is that since we usually choose a small $\epsilon$, the temporary update is a small perturbation of the perceptron unitaries. Assume the perceptron unitary is $U_p$ and the update unitary is $U_u=e^{i\epsilon K_u}$. By Taylor's expansion, the update perturbation is given by:
\begin{align*}
    U_uU_p-U_p &= (I+i\epsilon K_u+O(\epsilon^2)-I)U_p\\
    &= (i\epsilon K_u+O(\epsilon^2))U_p
\end{align*}
Therefore, the perturbation is small compared to the perceptron unitary when $\epsilon$ is small and $K_u$ is bounded, and intuitively we can consider the local temporary updated model as a same model. We also show it is feasible to use larger interval length in the experiment, where it reduces the synchronization iterations and accelerates the training speed.



\subsubsection{Why Unitary}

When communicating between the central federated server and quantum nodes, we choose to use update unitary matrix as the model parameters for transmission. The reasons are below. First, in classic federated learning, each node sends the local updated model, or gradient to the central server. In the quantum neural network, the analogue is the local updated network unitary $U_n^{l,j}$ or updated unitary $U_{n,k}^{l,j}$. Second, the central server can simply apply the update unitaries from each worker to update the global model. It is more convenient and reduces the computations in the central server side. Third, unitary is the basic operation in quantum mechanics and it is easier to optimize in the system level.


\subsubsection{Learning Rate and Step Size}

In the QuanFedNode algorithm, there are learning rate $\eta$ and step size $\epsilon$ to control the update and they have different meanings. $\epsilon$ is derived from classic quantum neural network update, and $\eta$ is derived from how we would like to bound the update matrix $K$. A larger $\eta$ leads to a tighter bound of $K$. However, actually we can rewrite the update unitary by:
\begin{equation}
    U_{n,k}^{l,j}=e^{i\epsilon\frac{N_n}{N_t}K_j^l}=e^{i\epsilon\eta\frac{2^{m_{l-1}}i}{N_t}\sum_{x=1}^{N_n}tr_{rest}M_x^{l,j}}
\end{equation}
We can actually combine $\epsilon$ and $\eta$ to one parameter.
For convenience and easy to understand, we set $\lambda=1.0$ at all time and fine tune the step size by adjusting $\epsilon$ in practice.




\section{Experiment}

In this section, we conduct simulated experiments for our QuantumFed framework. 

\subsection{Environment Setup}

Here we use a quantum environment simulated by QuTip library~\footnote{QuTip: https://github.com/qutip/qutip} (Quantum Toolbox in Python). We set up our experiment environment in the following aspects. 

First, in order to get the training data, similar to~\cite{Beer2020}, we first randomly generate a global unitary $U_g$ which is the unitary we would like to approximate. Then we randomly generate the training data input and apply the global unitary to the input to get the corresponding output. We use the randomly generated input and output pair as the clean training data. The same method is applied to generate the test data. In this way, we can generate clean training data $(|\phi_{n,x}^{in}\rangle,U_g|\phi_{n,x}^{in}\rangle)$ on the node $n$ side, and test data $(|\phi_{test,x}^{in}\rangle,U_g|\phi_{test,x}^{in}\rangle)$ on the central server side. In order to show the robustness of the training, we also pollute a proportion of training data with randomly generated input and output to get noisy training data.
Second, as for the quantum neural network architecture, because the experiments that we conduct are in a simulated environment using the classic computer and the computational complexity increases exponentially with the width of the network increases, we choose to train small-size quantum neural networks with width that are not greater than $3$. In this section, we choose a network of size $2$-$3$-$2$.
Third, in order to simulate the heterogeneous federated learning environment, we put similar training data into the same node. We first gather all the generated training data from all nodes, sort them by their vector representation value, and divide them to each node in order. In this way, we can somehow guarantee that the data on each node is not i.i.d.
Fourth, we measure the experiment results using two metrics. First metric is the fidelity cost function that we defined in (\ref{fidelity}), to show the probability that the output state will be identified as the output label in a measurement. We also adopt another metric mean square error (MSE) that is widely used in classic machine learning as a comparison. The MSE is defined below:
\begin{equation}
    \text{MSE}=\frac{1}{N}\sum_{x=1}^{N}\|\rho_x^{out}-|\phi_x^{out}\rangle\langle\phi_x^{out}|\|^2
\end{equation}
We examine our experiments using both metrics on the training and test data respectively to show the performance.

We set $\eta=1.0$, $\epsilon=0.1$, $N=100$, and $N_p=10$ if not specified.

\subsection{Experiment Results}

\subsubsection{Accuracy}

We first show how a $2$-$3$-$2$ quantum network performs with different interval lengths in Figure~\ref{interval_comparison}.
\begin{figure}[htb]
    \centering
    \includegraphics[width=\columnwidth]{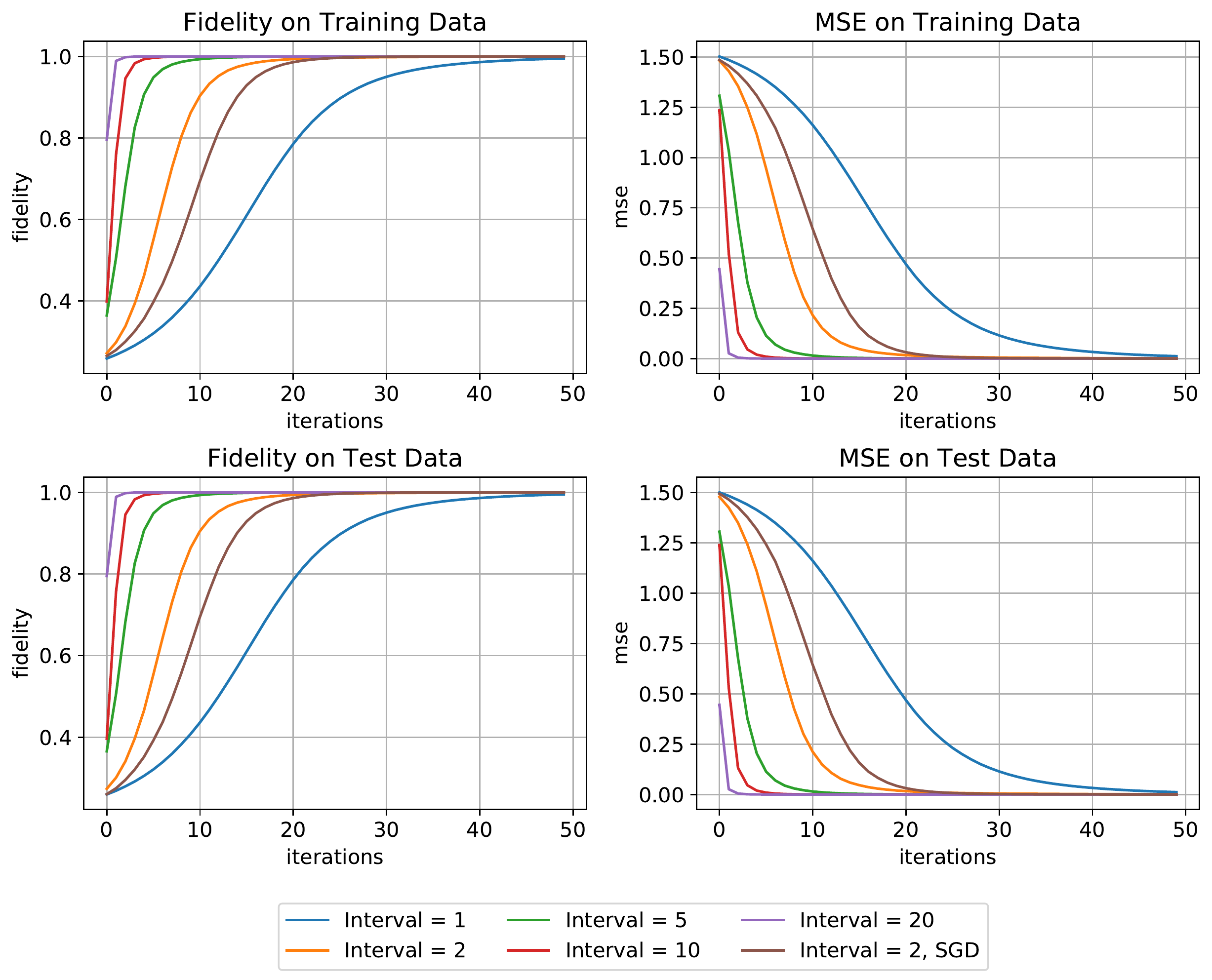}
    \caption{Experiment results of a $2$-$3$-$2$ quantum network with different interval lengths.}
    \label{interval_comparison}
\end{figure}
Here, the interval length of $1$ case is actually the same as the scenario that is running on a single machine. Therefore, we can see that after $50$ iterations, all of them reach fidelity of approximately $1$ and MSE of approximately $0$ on both training data and test data. This shows that our QuantumFed framework works on collaborating different quantum nodes for training a global model. Besides, we can find that the performance becomes better when we conduct more local steps in each iteration. This is because we have more local training on local data, which learns more information in each iteration. In addition, as a comparison, we also plot the SGD scenario with interval length $2$, here we use a mini-batch of $5$ for this experiment. We can see that the convergence speed is slower a little bit, which makes sense because we have less data in each iteration, but the final performance is similar. This shows that our framework is feasible for both SGD and GD optimization.

\subsubsection{Robustness}
We then show the performance of a $2$-$3$-$2$ quantum network with a different ratio of noisy data. We compare the data with $10\%$ noisy data to $90\%$ noisy data on noisy training data and clean test data. The results are in Figure~\ref{noise_comparison}.
\begin{figure}[htb]
    \centering
    \includegraphics[width=\columnwidth]{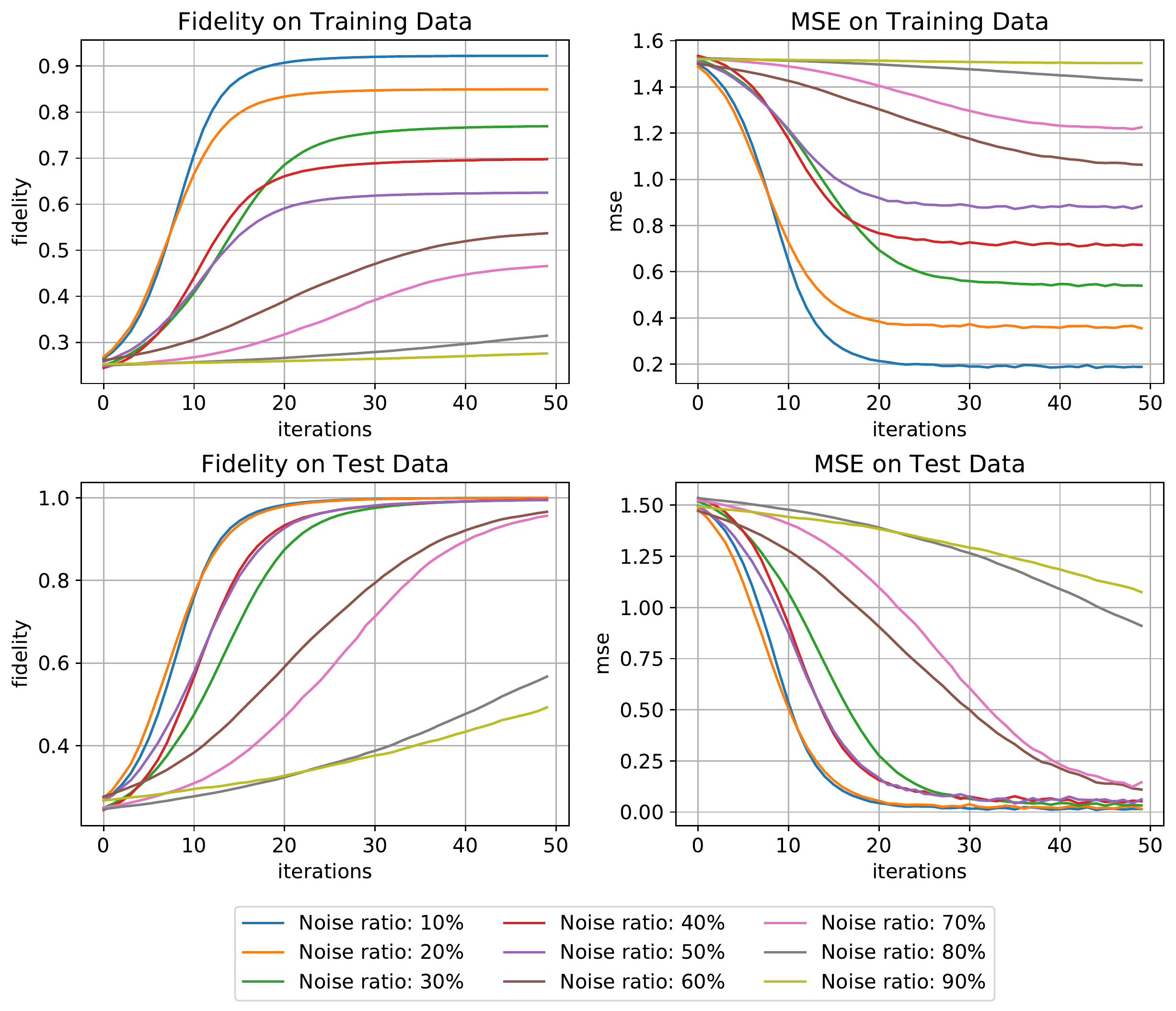}
    \caption{Experiment results of a $2$-$3$-$2$ quantum network with different ratios of noisy data.}
    \label{noise_comparison}
\end{figure}
As we can see from the figure, the performance keeps acceptable when the noise data ratio is no more than $70\%$, while the final performance is similar when the noise ratio is no more than $50\%$. This shows the robustness of our QuantumFed framework and it is able to resist with a considerable proportion of noisy data. 


\section{Conclusion}

In this paper, we propose a novel quantum federated learning framework in which multiple quantum nodes collaborate using local quantum data. Several experiments are conducted to show the feasibility and robustness of our QuantumFed framework. With the emergence of quantum computing, the potential of quantum neural networks is enormous. We believe it will be practical to train a deep neural network collaboratively on multiple quantum devices in the near future.




\bibliographystyle{IEEEtranS}
\bibliography{main}

\end{document}